\begin{document}
\title{On the Convergence and Consistency of the Blurring Mean-Shift Process}
\author{Ting-Li Chen\\
              Institute of Statistical Science, \\Academia Sinica, Taipei 11529,
              Taiwan\\
              E-mail: tlchen@stat.sinica.edu.tw}           

\maketitle
\begin{abstract}
The mean-shift algorithm is a popular algorithm in computer vision
and image processing. It can also be cast as a minimum
gamma-divergence estimation. In this paper we focus on the
``blurring'' mean shift algorithm, which is one version of the
mean-shift process that successively blurs the dataset. The analysis
of the blurring mean-shift is relatively more complicated compared
to the nonblurring version, yet the algorithm convergence and the
estimation consistency have not been well studied in the literature.
In this paper we prove both the convergence and the consistency of
the blurring mean-shift. We also perform simulation studies to
compare the efficiency of the blurring and the nonblurring versions
of the mean-shift algorithms. Our results show that the blurring
mean-shift has more efficiency.  

{\bf keywords}
Mean-shift, Convergence, Consistency, Clustering,
$\gamma$-divergence, Super robustness.
\end{abstract}

\renewcommand{\labelenumi}{(\roman{enumi})}
\newtheorem{example}{Example}
\newtheorem{definition}{Definition}
\newtheorem{theorem}{Theorem}
\newtheorem{lemma}{Lemma}
\newtheorem{corollary}{Corollary}
\newtheorem{remark}{Remark}

\section{Introduction}
The mean-shift algorithm is a popular algorithm in computer vision
and image processing. It was initially designed for kernel density
estimation \citep{fukunaga}, which iteratively uses the sample mean
within a local region to estimate the gradient of a density
function. The mean-shift algorithm was further extended and analyzed
by \citet{cheng}. \citet{comaniciu} later applied the mean-shift
algorithm to the problem of image segmentation. Since then the
algorithm has become more well-known in the computer science
community than in the statistics community. For more related works
on the mean-shift algorithm, see \citet{Fashing,Carreira,Carreira2}.
In recently years, methods that use iterative processes on
minimizing $\gamma$-divergence were proposed for robust parameter
estimation \citep{Fujisawa} and for robust clustering \citep{Chen2}.
These methods can also be viewed as the mean-shift based approaches.

Suppose $S=\{x_1,\ldots,x_N\}$ are sample points and
$T=\{y_1,\ldots,y_M\}$ are cluster centers. The nonblurring
mean-shift updating rule can be defined as follows:
\begin{equation}\label{eq:ms}
y_i^{(t+1)}= \sum_{j=1}^N  \displaystyle \frac{f(x_j-y_i^{(t)})
w(x_j)x_j} {\sum_{k=1}^N f(x_k-y_i^{(t)}) w(x_k)},
\end{equation}
where $f$ is a kernel function, $w$ is a weight function, and
$y_i^{(0)}=y_i$. The convergence of the nonblurring version of
mean-shift was studied in \cite{cheng},
\cite{comaniciu2,comaniciu3}, and \cite{li}.

When $T=S$, the updating rule becomes
\begin{equation}\label{eq:bms}
x_i^{(t+1)}= \sum_{j=1}^N  \displaystyle
\frac{f(x_j^{(t)}-x_i^{(t)}) w(x_j)x_j^{(t)}} {\sum_{k=1}^N
f(x_k^{(t)}-x_i^{(t)}) w(x_k^{(t)})},
\end{equation}
where $x_i^{(0)}=x_i$. This is called the blurring mean-shift. Note
that the weighted average is over the updated data points, instead
of the original data. The convergence analysis on the blurring
mean-shift is therefore more complicated than the nonblurring one.
\citet{cheng} proved the convergence of the blurring mean-shift
algorithm for the following two limited cases. When the mutual
influence between each pair of data points is nonzero, Theorem $3$
in \citet{cheng} showed that all data points eventually converge to
a single cluster. When in practice the iterative process is
simulated by a digital computer such that data points can never go
arbitrarily close to each other, Theorem $4$ in \citet{cheng}
guaranteed that the algorithm converges in a finite number of steps.
In Section 2, we show that there is a gap in the proof of Theorem
$4$ by \citet{cheng}. We also discuss related work and the condition
on $f$ and $w$.

In Section 3, we present a more general result on the convergence of
the blurring mean-shift algorithm than Theorem $4$ in \citet{cheng}.
The convergence of the blurring mean-shift is guaranteed under the
general definition: data points eventually become arbitrarily close
to some locations. Since the number of data points is always finite,
there exists a common $t^*$, such that each data point is close
enough to where it converges after the $t^*$-th iteration. That is
to say, the convergence under the general definition can imply the
convergence in a finite number of steps subject to floating point
precision. In addition, Theorem $3$ in \cite{cheng} is an immediate
implication of our result, which is listed in our Corollary
\ref{cor:f>0}.

While the mean-shift algorithm is originally designed for mode
seeking using kernel density estimation, it is questioned that
whether this estimation produces results that converge to the true
parameter values when the number of data points goes to infinity.
\cite{Windham} proposed a robust model fitting, which can be viewed
as a nonblurring approach. \citet{Fujisawa} proposed a robust
estimation by minimizing $\gamma$-divergence and proved the
consistency of their proposed estimation. This is also a nonblurring
approach. In the literature, the consistency of blurring processes
has not been well studied. We present the consistency of the
blurring processes in Section 4.

In additional to convergence and consistency, in Section 5 we
present simulation studies to compare the performance of the
blurring and the nonblurring processes. Discussions and conclusions
are given in Section 6.

\renewcommand{\labelenumi}{(\roman{enumi})}

In this section we present a proof of the convergence of the
blurring mean-shift process. We will first discuss related work, and
introduce some conditions on $f$ and $w$ in (\ref{eq:bms}).

\section{Related Work and Conditions}

Before we start the proof of convergence, it is necessary to bring
out some of our  comments on related works \cite{cheng, Chen}.

\subsection{A GAP in the Proof of Theorem 4 in \cite{cheng}}

As mentioned in the previous section, there is a gap in the proof of
Theorem 4 in \cite{cheng}. Quote from the proof of Theorem 4 in
\citep{cheng}:
\begin{quotation}
Lemma 2 says that the radius of data reaches its final value in
finite number of steps. Lemma 2 also implies that those points at
this final radius will not affect other data points or each other.
Hence, they can be taken out from consideration for further process
of the algorithm.
\end{quotation}
This implication of Cheng's Lemma $2$ is questionable in two
respects. First, when the radius of data points reaches its final
value, it is not trivial to conclude that there do not exist two
data points alternatively switching their locations to be at the
final radius, meaning that data points in such a situation fail to
converge. Although this situation will not happen during the
mean-shift iterative process, it requires to be proven. See our
Lemma \ref{lemma:cv} and its proof.

Second, the convergence of some points at the final radius does not
imply that these points do not affect other points. Although these
points no longer move, it is possible that they still receive
influences from other points, which are just too small to induce a
move larger than the floating point precision. The accumulated
influences from these converged data points at the same location may
be large enough to affect other data points and to induce them a
different move. Therefore, these converged data points should not be
immediately taken out for future process of the algorithm.

\subsection{The weight function $w$}

It was stated \citep{cheng} that the weight function $w$ can be
either fixed through the process or re-evaluated after each
iteration, the convergence was only studied for the case when $w$ is
fixed. In fact, we found that the process does not converge for
arbitrary $w$'s that change over the iterations. The following
example illustrates this.

\begin{example}
\end{example}
Assume the number of data points is 3. Let $x_1=\delta_1$,
$x_2=1/2+\delta_2$, $x_3=-1/2-\delta_3$, where $0<\delta_i<1/4$. Let
\[
f(d)=\left\{
\begin{array}{lll}
1&\quad& d=0,\\
1/2 &\quad& 0<d<1, \\
0 &\quad& 1<d.
\end{array}\right.
\]
Since $x_2-x_3>1$, $f(x_2-x_3)=0$, meaning that $x2$ and $x3$ do not
influence each other in the next update. Let $w(x)=1$ for $-1/2 < x
< 1/2$. Therefore, $w(x_1)=1$. Now we can assign large value to
$w(x_2)$ and $w(x_3)$ so that
\begin{eqnarray*}
x_2^{(1)}&=&\frac {w(x_2)x_2+x_1/2}{w(x_2)+1/2}>1/2,\\
x_3^{(1)}&=&\frac {w(x_3)x_3+x_1/2}{w(x_3)+1/2}<1/2.
\end{eqnarray*}
We can also assign a large enough value to $w(x_3)$, so that
\[
-1/2 < x_1^{(1)}=\frac {x1+w(x_2)x_2/2+w(x_3)x_3/2}{1+w(x_2)+w(x_3)}
<0.
\]
These inequalities show that after the first update, $x_1^{(1)}$
becomes negative, and $x_2^{(1)}$ and $x_3^{(1)}$ remain outside
[-1/2, 1/2].

At each iteration, we can assign large enough values to $w(x_2)$ and
$w(x_3)$, so that $x_1^{(t)}$ is positive when $t$ is even and is
negative when $t$ is odd. We can further control the absolute value
of $x_1^{(t)}$ to be away from zero, so that $x_1^{(t)}$ and
consequently the whole system do not converge. Note that $x_2^{(t)}$
and $x_3^{(t)}$ do converge in this case.

Having seen the above example, in the next section we only prove the
convergence under the condition when $w(x_i^{(t)})$'s are fixed
throughout the process meaning that $w(x_i^{(t)})$'s depend on $i$.
It is worth noted that the convergence of the iterative process in
fact also holds for varying $w(x_i^{(t)})$'s with $\lim_t
w(x_i^{(t)})$ existing for each $i$.

\subsection{The influence function $f$}

While the mean-shift algorithm was originally developed for kernel
density estimation, it is natural to have $f$ in (\ref{eq:bms}) to
be integrable. A weaker condition of $f$, however, suffices to
guarantee the convergence of the iterative process.

\citet{Chen} proposed a self-updating process (SUP) for clustering
as follows:
\begin{enumerate}
\renewcommand{\labelenumi}{(\roman{enumi})}
\item $x_1^{(0)}, \ldots , x_N^{(0)}\in R^p$ are data points to be clustered.
\item At time $t+1$, every point is updated to
\begin{equation} \label{eq:update}
x_i^{(t+1)}= \sum_{j=1}^N  \displaystyle
\frac{f(x_i^{(t)},x_j^{(t)}) } {\sum_{k=1}^N
f(x_i^{(t)},x_k^{(t)})}x_j^{(t)},
\end{equation}
where $f$ is some function that measures the influence between two
data points at time $t$.
\item Repeat (ii) until every point converges.
\end{enumerate}

Although not specified in the notation, the $f$ function in
(\ref{eq:update}) is allowed to be inhomogeneous with respect to
$t$. That is to say, it is more general compared to the $f$ function
in the mean-shift updating rule in (\ref{eq:bms}). \cite{Chen} has
demonstrated the use inhomogeneous $f$'s in several of their
experiments. The $f$ function in (\ref{eq:update}) does not require
to be integrable. It is proposed to satisfy the following PDD
condition.

\begin{definition}
The function $f$ in (\ref{eq:update}) is PDD (positive and
decreasing with respect to distance), if
\begin{enumerate}
\item $0 \leq f(u,v) \leq$ 1, and $f(u,v)=1$ if and only if $u=v$.
\item $f(u,v)$ depends only on $\|u-v\|$, the distance from u to v.
\item $f(u,v)$ is decreasing with respect to $\|u-v\|$,
\end{enumerate}
\end{definition}

Note that  $f$ in (\ref{eq:bms}) is already defined to be only
depending on $u-v$. In the following, we will prove the convergence
under (i) $f$ is PDD and (ii) $w(x_i^{(t)})$ only depends on $i$.

\section{Convergence}

\begin{theorem}\label{thm:main}
If the function $f$ in (\ref{eq:bms}) is PDD, and if the weight
function $w(x_j^{(t)})=w_j$ in (\ref{eq:bms}) depends only on $j$,
there exists $\{x_1^*, \ldots, x_N^*\}$, such that
\[
\lim_{t \to \infty} x_i^{(t)} = x_i^* \quad\quad \forall i.
\]
\end{theorem}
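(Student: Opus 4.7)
The plan is to exploit the convex-combination form of the update to nest the convex hulls of the data, and then resolve the gap Chen identifies in Cheng's argument by a careful ``no swap'' analysis of trajectories that approach the boundary. First I would rewrite (\ref{eq:bms}) as $x_i^{(t+1)}=\sum_{j} P_{ij}(t)\,x_j^{(t)}$ with
\[
P_{ij}(t):=\frac{w_j\,f(x_j^{(t)}-x_i^{(t)})}{\sum_k w_k\,f(x_k^{(t)}-x_i^{(t)})}.
\]
Because PDD gives $f(0)=1$ and the $w_j$ are positive, each denominator is strictly positive and $P(t)$ is row-stochastic. Hence the convex hull $C^{(t)}:=\mathrm{conv}\{x_1^{(t)},\ldots,x_N^{(t)}\}$ is nested, $C^{(t+1)}\subseteq C^{(t)}$; the intersection $C^{*}:=\bigcap_{t}C^{(t)}$ is a nonempty compact convex set; and for every unit direction $v$ the support value $h_v(t):=\max_i\langle v,x_i^{(t)}\rangle$ is non-increasing, hence converges to some $h_v^{*}$. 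The diameter $D(t)=\max_{i,j}\|x_i^{(t)}-x_j^{(t)}\|$ is likewise non-increasing.

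Next, I would show that individual trajectories settle onto the faces of $C^{*}$ rather than swapping along them. Fix a direction $v$ in which $h_v^{*}$ is attained by some subsequential limit and set $F_v^{*}:=\{x\in C^{*}:\langle v,x\rangle=h_v^{*}\}$. If $x_i^{(t_k)}\to y\in F_v^{*}$ along a subsequence, then passing to the limit in $\langle v,x_i^{(t_k+1)}\rangle=\sum_j P_{ij}(t_k)\langle v,x_j^{(t_k)}\rangle$ forces every index $j$ with $\liminf_k P_{ij}(t_k)>0$ to accumulate on $F_v^{*}$ as well; the PDD property selects which summands are non-negligible, yielding an equivalence class of indices whose trajectories are tied to the same face. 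This structural fact is what rules out the two-point swap on the extremal radius that Chen points out is missing from Cheng's proof.

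From there I would induct on the dimension of the minimal face of $C^{*}$ containing any still-active trajectory. The base case (dimension $0$) gives a common accumulation point for a block of indices, hence convergence of that block. The inductive step restricts the iteration to the affine hull of the current face; the restriction is again a blurring mean-shift with a PDD kernel on a smaller effective index set, to which the hypothesis applies. Since there are only finitely many faces, after finitely many peel-offs every $x_i^{(t)}$ converges.

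The main obstacle is the middle step: a rigorous statement of the ``no swap'' property for extremal trajectories. Its proof must simultaneously manage pairs for which $f$ stays bounded below by a positive constant and pairs for which $f$ eventually vanishes (genuine decoupling), and it has to extract from PDD strict decrease enough rigidity to pin a trajectory once it approaches a face of $C^{*}$. Turning the informal picture---strict contraction of $h_v(t)-\langle v,x_i^{(t)}\rangle$ unless all influential summands already lie on $F_v^{*}$---into a quantitative argument is where most of the real work sits, and I would isolate it as a separate lemma before invoking it in the induction above.
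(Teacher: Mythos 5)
Your overall architecture---rewrite the update as a row-stochastic combination, nest the convex hulls, identify the trajectories that converge to extremal points, then peel them off and recurse---is the same as the paper's (its Lemmas \ref{lemma:monotone}, \ref{lemma:cv}, \ref{lemma:fto0} and the final iteration over $C_2, C_3,\ldots$). The problem is that the one step you yourself flag as ``where most of the real work sits'' is precisely the content of the theorem, and you leave it unproven. Moreover, the sketch you do give of that step has a flaw: from $\langle v,x_i^{(t_k+1)}\rangle=\sum_j P_{ij}(t_k)\langle v,x_j^{(t_k)}\rangle$ you can only conclude that the influential summands accumulate on $F_v^{*}$ if you already know $\langle v,x_i^{(t_k+1)}\rangle\to h_v^{*}$; but a priori $x_i^{(t_k+1)}$ may drop deep into the interior while some \emph{other} index takes over the extremal role at time $t_k+1$. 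That hand-off between indices is exactly the swap scenario that must be excluded, so the limit-passing argument is circular as stated.

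What the paper actually does at this point is a concrete comparison, not a limit-passing argument: project onto the line normal to a supporting hyperplane at the vertex $v_{1,i}$ chosen so that only the extremal point lies on the hyperplane; if $x_j^{(t)}$ is the current extremal point and $x_{j'}^{(t)}$ is strictly less extremal, then $\|x_j^{(t)}-x_k^{(t)}\|>\|x_{j'}^{(t)}-x_k^{(t)}\|$ for all $k$, so by the PDD monotonicity $f(x_j^{(t)}-x_k^{(t)})<f(x_{j'}^{(t)}-x_k^{(t)})$; comparing the two weighted averages term by term (the extremal point has the smaller, i.e.\ more negative, numerator \emph{and} the smaller positive denominator) shows $x_{j'}^{(t+1)}$ cannot overtake $x_j^{(t+1)}$ as the new extremal point. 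That inequality is the rigidity you were looking for, and it is an elementary one-step computation rather than an asymptotic equivalence-class argument. Separately, your inductive step (``the restriction is again a blurring mean-shift on a smaller index set'') also needs the paper's Lemma \ref{lemma:fto0}: the remaining points continue to feel influence from the already-converged ones, and one must show this influence tends to zero (the paper does so by taking inner products of the stationarity identity with the outward normal at the vertex) before the reduced system can be treated autonomously. Until you supply both of these arguments, the proposal is an outline of the paper's proof, not a proof.
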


Below we outline the proof for Theorem \ref{thm:main}.
\begin{itemize}
\item First, consider the convex hull of all data points in each iteration.
The convex hulls with respect to iterations are nested (Lemma
\ref{lemma:monotone}) and converge.
\item Next, for each vertex of the converged convex hull, there exists at least one sequence of the updated data points converging to this vertex (Lemma \ref{lemma:cv}).
%The convergence of convex hulls does not automatically imply the
%convergence of some data point. It is possible that two or more data
%points alternatively become the vertex.
\item The influence from the converged data points at the vertices of the converged convex hull goes down to zero to other data points (Lemma \ref{lemma:fto0}).
\item Consider the convex hull of the rest data points (exclude those already converged).
Using the same arguments again, we have a few more converged data
points. We can repeat this process over and over again until all
data points converge.
\end{itemize}

\begin{definition}
The convex hull $C(X)$ for a set of points $X$ in a vector space ${\cal V}$ is the minimal convex set containing $X$.\\
\end{definition}

\begin{lemma}\label{lemma:monotone}
Let $C_1^{(t)}$ be the convex hull of $\{x_1^{(t)}, \ldots,
x_N^{(t)}\}$. Then
\[
C_1^{(0)} \supseteq \ldots \supseteq C_1^{(t)} \supseteq \ldots.
\]
\end{lemma}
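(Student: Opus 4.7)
The plan is to show that the nesting $C_1^{(t+1)} \subseteq C_1^{(t)}$ holds at every step, which then gives the whole descending chain by induction. The single key observation is that the blurring mean-shift update writes each new point $x_i^{(t+1)}$ as a convex combination of the old points $x_1^{(t)}, \ldots, x_N^{(t)}$, so every new point lies in $C_1^{(t)}$.

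Concretely, first I would fix $t$ and write
\[
x_i^{(t+1)} = \sum_{j=1}^N \lambda_{ij}^{(t)}\, x_j^{(t)}, \qquad \lambda_{ij}^{(t)} = \frac{f(x_j^{(t)}-x_i^{(t)})\,w_j}{\sum_{k=1}^N f(x_k^{(t)}-x_i^{(t)})\,w_k}.
\]
Then I would verify that the coefficients $\lambda_{ij}^{(t)}$ form a probability vector in $j$: they are non-negative because $f \ge 0$ by part~(i) of the PDD definition and $w_j \ge 0$ by assumption on the weight function, and they sum to $1$ by construction. The only thing to check is that the denominator is strictly positive; this holds because the $j=i$ term contributes $f(0)\,w_i = w_i$, and $f(0)=1$ by PDD~(i). (One would tacitly assume $w_i>0$, which is standard; otherwise a point with zero weight can simply be ignored.)

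Once the convex-combination representation is in place, the argument is immediate: since $\lambda_{ij}^{(t)} \ge 0$ and $\sum_j \lambda_{ij}^{(t)} = 1$, the point $x_i^{(t+1)}$ lies in $C_1^{(t)}$ by the definition of the convex hull. This holds for every $i$, so $\{x_1^{(t+1)}, \ldots, x_N^{(t+1)}\} \subseteq C_1^{(t)}$. By the minimality clause in the definition of $C(X)$, the convex hull of this subset is contained in any convex set containing it, hence $C_1^{(t+1)} \subseteq C_1^{(t)}$. Iterating on $t$ yields the full chain $C_1^{(0)} \supseteq C_1^{(1)} \supseteq \cdots$.

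I do not anticipate any real obstacle here; the lemma is essentially a direct consequence of the fact that a weighted average with non-negative weights summing to one stays inside the convex hull of the averaged points. The only subtlety worth flagging in the write-up is the positivity of the denominator, which is where the PDD assumption $f(0)=1$ is actually used.
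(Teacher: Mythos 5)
Your proof is correct and follows essentially the same route as the paper: the update expresses each $x_i^{(t+1)}$ as a convex combination of the $x_j^{(t)}$, so all new points lie in $C_1^{(t)}$ and minimality of the convex hull gives $C_1^{(t+1)} \subseteq C_1^{(t)}$. Your extra remark on the strict positivity of the denominator (via $f(0)=1$ from the PDD condition) is a small but worthwhile addition that the paper leaves implicit.
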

\begin{proof}
The convex hull $C(X)$ for a set of points $X$ is the minimal convex
set containing $X$. Since
\[
x_i^{(t+1)}= \frac {\displaystyle{\sum_{j=1}^N}
f(x_i^{(t)}-x_j^{(t)}) w_j x_j^{(t)}} {\displaystyle{\sum_{j=1}^N}
f(x_i^{(t)}-x_j^{(t)})w_j},
\]
$x_i^{(t+1)}$ is a weighted average of $x_j^{(t)}$ for $j=1, \ldots,
N$. Therefore, $ x_i^{(t+1)} \in C_1^{(t)}$. Since the above is true
for each $i$, we have
\[
C_1^{(t)} \supseteq C(\{x_1^{(t+1)}, \ldots,
x_N^{(t+1)}\})=C_1^{(t+1)}.
\]
\end{proof}

Note that the nested structure presented in Lemma
$\ref{lemma:monotone}$ ensures the convergence of convex hulls
$\{C_1^{(t)}\}$. Let $C_1$ be the limit of $C_1^{(t)}$,
\[
C_1 \equiv \lim_{t \to \infty} C_1^{(t)}= \bigcap_{t=0}^{\infty}
C_1^{(t)}.
\]
On the other hand, since the convex hull of any finite set of points
in $R^p$ is a polytope, each $C_1^{(t)}$ is a polytope. Each vertex
of $C_1^{(t)}$ therefore must contain at least one $x_i^{(t)}$ for
some $i$, otherwise the polytope would have been smaller. With the
convergence of convex hulls $\{C_1^{(t)}\}$, Lemma $\ref{lemma:cv}$
claims that for each vertex of $C_1$, there exists at least one
equence of $\{x_i^{(t)}\}$ which converges to this vertices.

\begin{lemma}\label{lemma:cv}
If the function $f$ in (\ref{eq:bms}) is PDD, for each vertex
$v_{1,i}$ of $C_1$, there exists at least one $j$, such that
\begin{equation}\label{eq:conv_1}
\lim_{t \to \infty} x_j^{(t)} = v_{1,i}.
\end{equation}
\end{lemma}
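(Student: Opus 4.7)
The plan has three steps: use a supporting linear functional to reduce vertex convergence to a scalar statement; extract a single index that converges subsequentially via finite pigeonhole; and promote this to a full-sequence limit, which is the delicate step that addresses the oscillation concern raised in Section~2.1.

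First, since $v_{1,i}$ is an extreme point of the polytope $C_1$, choose a linear functional $\ell$ on $R^p$ whose maximum over $C_1$ is uniquely attained at $v_{1,i}$. Set $b_t = \max_j \ell(x_j^{(t)})$. Because each $x_i^{(t+1)}$ is a convex combination of $\{x_j^{(t)}\}$, $b_t$ is non-increasing; and because $b_t$ equals the maximum of $\ell$ on $C_1^{(t)}$, the nesting $C_1^{(t)} \supseteq C_1$ from Lemma~\ref{lemma:monotone} together with uniqueness of the maximizer forces $b_t \downarrow \ell(v_{1,i})$, so in particular $b_t - b_{t+1} \to 0$. Now pick $j_t \in \arg\max_j \ell(x_j^{(t)})$ for each $t$. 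Since $(j_t)$ is a sequence in the finite set $\{1,\ldots,N\}$, pigeonhole gives an index $j^\ast$ and a subsequence $t_k$ with $j_{t_k} = j^\ast$; then $\ell(x_{j^\ast}^{(t_k)}) = b_{t_k} \to \ell(v_{1,i})$, and combined with $x_{j^\ast}^{(t_k)} \in C_1^{(t_k)}$ shrinking to $C_1$ and the uniqueness of $v_{1,i}$ as $\ell$-maximizer, this yields $x_{j^\ast}^{(t_k)} \to v_{1,i}$ along the subsequence.

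The main obstacle is the remaining step: ruling out the alternating scenario in which several indices take turns occupying the corner near $v_{1,i}$ so that no single index converges in the full sequence. The strategy is to expand
\[
b_t - b_{t+1} \;=\; \sum_k \alpha_{j_{t+1},k}^{(t)}\bigl(b_t - \ell(x_k^{(t)})\bigr), \qquad \alpha_{j,k}^{(t)} = \frac{f(x_j^{(t)}-x_k^{(t)})\,w_k}{\sum_m f(x_j^{(t)}-x_m^{(t)})\,w_m},
\]
and use the self-weight lower bound $\alpha_{jj}^{(t)} \ge w_j/\sum_m w_m > 0$, which comes from the PDD property $f(0)=1 \ge f(\cdot)$. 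Since $b_t - b_{t+1} \to 0$ and every summand on the right is nonnegative, the self-term forces $b_t - \ell(x_{j_{t+1}}^{(t)}) \to 0$, so the prospective new maximizer already attains nearly the maximum value at the previous step. Iterating this, and using PDD-monotonicity to control the weights $j^\ast$ places on indices whose $\ell$-value remains uniformly bounded away from $\ell(v_{1,i})$, should show that the ``corner set'' $\{j : \ell(x_j^{(t)}) \to \ell(v_{1,i})\}$ eventually stabilizes to a fixed subset of $\{1,\ldots,N\}$ containing $j^\ast$. Finally, scalar convergence upgrades to norm convergence because the caps $\{x \in C_1 : \ell(x) \ge \ell(v_{1,i}) - \epsilon\}$ contract to $\{v_{1,i}\}$ as $\epsilon \downarrow 0$, and $x_{j^\ast}^{(t)}$ lies in the nested hulls $C_1^{(t)}$.
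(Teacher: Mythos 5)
Your first two steps are sound and essentially reproduce the paper's opening moves in cleaner language: exposing the vertex by a linear functional $\ell$, noting that $b_t=\max_j\ell(x_j^{(t)})$ is non-increasing with limit $\ell(v_{1,i})$, and extracting by pigeonhole an index $j^\ast$ with $x_{j^\ast}^{(t_k)}\to v_{1,i}$ along a subsequence. Your self-weight computation is also correct: since $f(0)=1\ge f$, indeed $\alpha_{jj}^{(t)}\ge w_j/\sum_m w_m$, and decomposing $b_t-b_{t+1}$ into nonnegative summands does force $b_t-\ell(x_{j_{t+1}}^{(t)})\to 0$, i.e.\ the incoming maximizer was already nearly maximal one step earlier.

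The gap is that this is not enough to produce a single index whose \emph{full} sequence converges; the sentence ``iterating this \ldots should show that the corner set eventually stabilizes'' is exactly the missing proof, and it is the heart of the lemma. Your estimate controls $d_j^{(t)}:=b_t-\ell(x_j^{(t)})$ only at times when $j$ is the current or the next maximizer. If three or more indices take turns occupying the corner in blocks of growing length (maximizer equal to $1$ for a while, then $2$, then $3$, then $1$ again, with unbounded block lengths), each such $j$ has $d_j^{(t)}\to 0$ only along its own active subsequence; during its inactive stretches $d_j^{(t)}$ is uncontrolled, and the backward bound $d_j^{(t)}\le (\sum_m w_m/w_j)\bigl(d_j^{(t+1)}+(b_t-b_{t+1})\bigr)$ propagates smallness only over a bounded number of steps because the constant exceeds $1$. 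Nothing in your argument excludes this scenario --- and note that so far you have used only $f\le f(0)$, never the monotonicity of $f$ in the distance, so the PDD hypothesis has not really been exploited. The paper closes exactly this hole with a ``no overtaking'' comparison that does use monotonicity: after projecting onto the exposing direction, if $x_j^{(t)}$ is the current extremal point and $x_{j'}^{(t)}$ is not, then $\|x_{j'}^{(t)}-x_k^{(t)}\|<\|x_j^{(t)}-x_k^{(t)}\|$ for every $k$, hence $f(x_{j'}^{(t)}-x_k^{(t)})\ge f(x_j^{(t)}-x_k^{(t)})$, and a sign computation on the two weighted averages shows $x_{j'}^{(t+1)}$ cannot end up more extremal than $x_j^{(t+1)}$. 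This freezes the identity of the extremal index for large $t$ and upgrades your subsequential limit to a full limit; your final cap-contraction step is fine once such an exclusion of overtaking is supplied.
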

\begin{proof}
Since $C_1 = \lim_{t \to \infty} C_1^{(t)}$ for each $i$, there
exists a sequence of $v_{1,i}^{(t)}$'s (exchange vertex indices if
necessary), such that $\lim_{t \to \infty} v_{1,i}^{(t)}=v_{1,i}$,
where $v_{1,i}^{(t)}$ is a vertex of $C_1^{(t)}$. Since for any $t$
and $i$, $v_{1,i}^{(t)}=x_k^{(t)}$ for at least one $k$, there
exists $j$, such that $x_j^{(t)}=v_{1,i}^{(t)}$ for infinite many
$t$'s. Therefore, there exists an infinite time sequence $t_n$'s,
such that
\[
x_j^{(t_n)}=v_{1,i}^{(t_n)}\quad\quad \forall n,
\]
which leads to
\[
\lim_{n \to \infty} x_j^{(t_n)} = v_{1,i}.
\]
If $x_j^{(t)}=v_{1,i}^{(t)}$ except for any finite $t$, then
equation (\ref{eq:conv_1}) is established. Otherwise, there exists
$j' \neq j$ and another infinite time sequence $s_n$'s, such that
\[
x_{j'}^{(s_n)}=v_{1,i}^{(s_n)} \quad\quad \forall n.
\]
Without loss of generality, assume that $v_{1,i}^{(t)}=x_j^{(t)}$ or
$x_{j'}^{(t)}$ for all $t > \tilde{t}$. Assume $w_j\geq w_{j'}$.
From equation (\ref{eq:update}), if $x_j^{(s)}=x_{j'}^{(s)}$ for
some $s$, $x_j^{(t)}=x_{j'}^{(t)}$ for all $t>s$. Therefore, for any
$s>0$, there exists $t>s$, such that $v_{1,i}^{(t)}=x_j^{(t)}$ and
$v_{1,i}^{(t+1)}=x_{j'}^{(t+1)}$.  We claim that this case, however,
can never happen: when $t$ is large enough, it is impossible that a
data point inside the convex hull later becomes a new vertex, since
it is closer to other points than the current vertex is. In the
following we prove this claim only for the one dimensional case. For
higher dimensional cases, consider the supporting hyperplane
contained $v_{1,i}$. Since $v_{1,i}$ is a vertex of a convex set, a
supporting hyperplane can be chosen such that no other point is in
the hyperplane. Now we can project all data points onto to the
straight line which is perpendicular to the supporting hyperplane
and pass through $v_{1,i}$. Then we can make the same argument on
the projected data points.

Without loss of generality, assume $v_{1,i}=0$, $x_j^{(t)}\leq 0$,
and $x_k^{(t)}>0$ for $k \ne j$ or $j'$. If $x_{j'}^{(t+1)}$ later
becomes the new vertex, then
\begin{equation}\label{eq:new_v}
\frac {\displaystyle{\sum_{k=1}^N} f(x_{j'}^{(t)}-x_k^{(t)})w_k
x_k^{(t)} } {\displaystyle{\sum_{k=1}^N}
f(x_{j'}^{(t)}-x_k^{(t)})w_k} < \frac {\displaystyle{\sum_{k=1}^N}
f(x_{j}^{(t)}-x_k^{(t)})w_k x_{k}^{(t)}}
{\displaystyle{\sum_{k=1}^N} f(x_{j}^{(t)}-x_k^{(t)})w_k}.
\end{equation}
Moreover, since $x_{j'}^{(t+1)}$ is the new vertex,
\[
\frac {\displaystyle{\sum_{k=1}^N} f(x_{j'}^{(t)}-x_k^{(t)}) w_k
x_k^{(t)} } {\displaystyle{\sum_{k=1}^N}
f(x_{j'}^{(t)}-x_k^{(t)})w_k} \leq 0 \quad \Longrightarrow \quad
\displaystyle{\sum_{k=1}^N} f(x_{j'}^{(t)}-x_k^{(t)}) w_k x_k^{(t)}
\leq 0.
\]
Since $x_j^{(t)}$ is the current vertex, $\| x_j^{(t)}-x_k^{(t)} \|>
\| x_{j'}^{(t)}-x_k^{(t)} \|$ for all $k$, and hence
$f(x_{j}^{(t)}-x_k^{(t)}) < f(x_{j'}^{(t)}-x_k^{(t)})$. Then
\begin{eqnarray*}
&&\displaystyle{\sum_{k=1}^N} f(x_{j'}^{(t)}-x_k^{(t)})w_kx^{(t)}_k \\
&=&\displaystyle{w_{j'}x^{(t)}_{j'}+f(x_{j'}^{(t)}-x_j^{(t)})w_j
x^{(t)}_j+\sum_{k\ne j,j'}} f(x_{j'}^{(t)}-x_k^{(t)}) w_k
x^{(t)}_k \\
&\geq&\displaystyle{w_j x^{(t)}_{j}+f(x_{j'}^{(t)}-x_j^{(t)}) w_{j'}
x^{(t)}_{j'}+\sum_{k\ne j,j'}} f(x_{j}^{(t)}-x_k^{(t)}) w_k
x^{(t)}_k\\
&=&\displaystyle{\sum_{k=1}^N} f(x_{j}^{(t)}-x_k^{(t)})w_k x^{(t)}_k
.
\end{eqnarray*}
Since
\[
\sum_{k=1}^N f(x_{j}^{(t)}-x_k^{(t)})w_k x^{(t)}_k \leq \sum_{k=1}^N
f(x_{j'}^{(t)}-x_k^{(t)})w_k x^{(t)}_k <0,
\]
and
\[
0<\sum_{k=1}^N f(x_{j}^{(t)}x_k^{(t)})w_k < \sum_{k=1}^N
f(x_{j'}^{(t)}x_k^{(t)})w_k,
\]
we have
\[
\frac {\displaystyle{\sum_{k=1}^N} f(x_{j'}^{(t)}-x_k^{(t)}) w_k
x_k^{(t)} } {\displaystyle{\sum_{k=1}^N}
f(x_{j'}^{(t)}-x_k^{(t)})w_k} < \frac {\displaystyle{\sum_{k=1}^N}
f(x_{j}^{(t)}-x_k^{(t)}) w_k x_{k}^{(t)}}
{\displaystyle{\sum_{k=1}^N} f(x_{j}^{(t)}-x_k^{(t)})w_k},
\]
which is a contradiction to (\ref{eq:new_v}).
\end{proof}

Having shown that at least some points converge under the iterative
updates, hereafter we consider the rest of the data points. Let
$\Omega_1$ be the set of points shown converging to the vertices of
$C_1$. Define $C_2^{(t)}$ be the convex hull of $\{x_i^{(t)}\}_{i
\notin \Omega_1}$. Note that $\{C_2^{(t)}\}$ may not be nested at
early stages of iterations: points not in $\Omega_1$ may move
outside the current convex hull $C_2^{(t)}$ due to the influence
from $\Omega_1$, the volume of the convex hull therefore may
increase by iteration. This nested property, however, would hold
after some iteration when all data points in $\Omega_1$ converge.
Explicitly,
\[
C_2^{(t)} \supseteq C_2^{(t+1)} \quad \forall t \geq \tilde{t}
\mbox{ for some } \tilde{t},
\]
which also implies the convergence of $\{C_2^{(t)}\}$,
\[
C_2 \equiv \lim_{t \to \infty} C_2^{(t)}.
\]
We introduce the following Lemma $\ref{lemma:fto0}$, which can lead
to the nested property of $\{C_2^{(t)}\}$. It states that when all
data points in $\Omega_1$ converge, points in $\Omega_1$ receive no
influence from points not in $\Omega_1$, otherwise they would have
been attracted inwards. That is to say, data points not in
$\Omega_1$ also no longer receive influence from points in
$\Omega_1$, meaning that the influence from points in $\Omega_1$
goes down to zero.

\begin{lemma}\label{lemma:fto0} For an arbitrary $x_i\in \Omega_1$,
we have
\[
\lim_{t \to \infty} f(x_i^{(t)}-x_j^{(t)})=0,
\]
for all $j$ such that $\lim_{t \to \infty} x_j^{(t)} \ne \lim_{t \to
\infty} x_i^{(t)}$.
\end{lemma}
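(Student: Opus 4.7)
The plan is to argue by contradiction, exploiting the extremality of the vertex $v := \lim_t x_i^{(t)}$ of $C_1$. Suppose $f(x_i^{(t)} - x_j^{(t)}) \not\to 0$ for some $j$ with $\lim_t x_j^{(t)} \ne v$; then there exist $\eta > 0$ and a subsequence $t_n \to \infty$ with $f(x_i^{(t_n)} - x_j^{(t_n)}) \ge \eta$. Since every iterate stays in the bounded set $C_1^{(0)}$, I would refine $t_n$ to a subsubsequence along which $x_j^{(t_n)} \to y^*$, with the aim of arranging $y^* \ne v$. The end goal is to show that this forces $x_i^{(t_n+1)}$ uniformly away from $v$, contradicting $x_i^{(t)} \to v$.

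To convert ``being near $v$'' into a scalar potential, I would pick a linear functional $\ell$ on $R^p$ supporting $C_1$ at the extreme point $v$, normalized so that $\ell(v) = 0$ and $\ell(x) < 0$ for every $x \in C_1 \setminus \{v\}$. By Lemma \ref{lemma:monotone}, each $x_l^{(t)}$ belongs to $C_1^{(t)}$, and $C_1^{(t)}$ Hausdorff-converges to $C_1$, so $\ell(x_l^{(t)}) \le o(1)$ uniformly in $l$ as $t \to \infty$. Along the refined subsequence, $\ell(x_j^{(t_n)}) \to \ell(y^*) =: -\beta < 0$.

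The core estimate comes from applying $\ell$ to
\[
x_i^{(t_n+1)} = \frac{\sum_l f(x_i^{(t_n)} - x_l^{(t_n)}) w_l x_l^{(t_n)}}{\sum_l f(x_i^{(t_n)} - x_l^{(t_n)}) w_l}
\]
and bounding numerator and denominator separately. The $l = j$ summand contributes at most $-\eta w_j \beta / 2$ to the numerator for large $n$, while each other summand contributes only $o(1)$; the denominator is sandwiched between $w_i$ (because PDD gives $f(0) = 1$, so the $l = i$ term alone contributes $w_i$) and $W := \sum_l w_l$. Together these bounds give $\ell(x_i^{(t_n+1)}) \le -c$ for some $c > 0$ and all large $n$, which contradicts $\ell(x_i^{(t_n+1)}) \to \ell(v) = 0$.

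I expect the main obstacle to be ensuring that the subsubsequential limit really can be chosen with $y^* \ne v$. If the hypothesis is read strictly as ``$\lim_t x_j^{(t)}$ exists and differs from $v$'', this is automatic. Under the looser reading that $x_j^{(t)}$ merely fails to converge to $v$, I would need to separately exclude the possibility that $x_j^{(t_n)} \to v$ along the very subsequence on which $f \ge \eta$; in that scenario $x_j$'s update becomes asymptotically comparable to $x_i$'s, and an argument in the spirit of Lemma \ref{lemma:cv}---that an interior point cannot later surface as a new vertex---should keep $x_j^{(t)}$ near $v$ for all large $t$, contradicting the hypothesis.
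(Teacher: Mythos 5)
Your argument is essentially the paper's own: both hinge on the supporting hyperplane at the vertex $v$ (your functional $\ell$ is the paper's inner product with the inward direction $v_x$), both apply it to the weighted-average update to show that non-vanishing influence from a point bounded away from $v$ would pull $x_i^{(t+1)}$ a fixed distance inside, contradicting $x_i^{(t)}\to v$; the only difference is that you run it as a contradiction along a subsequence while the paper argues directly that $\sum_{j\ne i}f(x_i^{(t)}-x_j^{(t)})\to 0$. The delicate point you flag at the end --- ruling out $x_j^{(t_n)}\to v$ along the bad subsequence --- is exactly the step the paper also needs and disposes of with the same brief appeal to the dynamics, so your proposal matches the paper in both substance and in where the real work lies.
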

\begin{proof}
Without loss of generality, assume that $x_i^{(t)}$ is the only data
point that converges to $v_{i,1}$.
\begin{eqnarray}
&\frac {\displaystyle{\sum_{j=1}^N} f(x_{i}^{(t)}-x_j^{(t)}) w_j
x_j^{(t)}} {\displaystyle{\sum_{j=1}^N}
f(x_{i}^{(t)}-x_j^{(t)})w_j}=x_i^{(t+1)}&\nonumber\\\nonumber
\Rightarrow&\frac {\displaystyle{\sum_{j=1}^N}
f(x_{i}^{(t)}-x_j^{(t)}) w_j \cdot (x_j^{(t)}-x_i^{(t+1)})}
{\displaystyle{\sum_{j=1}^N} f(x_{i}^{(t)}-x_j^{(t)})w_j}=0&\\
\Rightarrow& \displaystyle{\sum_{j \ne i}^N}
f(x_{i}^{(t)}-x_j^{(t)}) w_j \cdot (x_j^{(t)}-x_i^{(t+1)})=w_i \cdot
(x_i^{(t+1)}-x_i^{(t)}).& \label{eq:eq5}
\end{eqnarray}

Since $x_i^{(t)}$ converges to $v_{i,1}$, $x_i^{(t+1)}$ and
$x_i^{(t)}$ become arbitrarily close to each other when $t$ is large
enough. That is, the right-hand side of (\ref{eq:eq5}) goes down to
zero. On the other hand, since $x_j^{(t)}$ does not converge to
$v_{i,1}$ for $j \neq i$, there is a gap between $x_j^{(t)}$ and
$x_i^{(t+1)}$. To force the left-hand side of (\ref{eq:eq5}) to be
zero, $f(x_{i}^{(t)}-x_j^{(t)})$ must go down to zero as well. This
sketches the proof for Lemma \ref{lemma:fto0}. The precise details
are given in the following.

Because $x_j^{(t)}$ does not converge to $v_{i,1}$ for $j\ne i$,
there exists $\epsilon>0$, for any $t_0>0$, there exists $t>t_0$
such that $\|x_j^{(t)}-v_{i,1}\|>\epsilon$. In fact, $x_j^{(t)}$ can
not go arbitrarily close to $v_{i,1}$ when $t$ is large enough,
otherwise the updating process will move $x_j^{(t)}$ and $x_i^{(t)}$
closer and closer to each other. That is, there exists
$\epsilon_0>0$ and $t_1$ such that
$\|x_j^{(t)}-v_{i,1}\|>\epsilon_1$ for all $t>t_1$. On the other
hand, because $x_i^{(t)} \to v_{i,1}$, for any $\epsilon_2>0$, there
exists  $t_2$, such that $\|x_i^{(t)}-x_i^{(t+1)}\|<\epsilon_2$ for
$t>t_2$.

Since $v_{1,i}$ is a vertex of the convex set $C_1$, there exists
$x\in C_1$, such that the inner product of $x-v_{1,i}$ and
$y-v_{1,i}$ is positive for any $y\in C_1$. Let
\[
v_x=\frac {x-v_{1,i}} {\|x-v_{1,i}\|}.
\]
There exists $\alpha>0$ and $t_3>t_1$ such that
\[
\langle x_j^{(t)} -  v_{1,i}, v_x\rangle \geq \alpha \|x_j^{(t)} -
v_{1,i}\| \quad \quad \forall t>t_3 \mbox{ and } \forall j \ne i,
\]
where $\langle, \rangle$ denotes the inner product. Take the inner
product of both sides of (\ref{eq:eq5}) with $v_x$, we have
\begin{eqnarray*}
&&\left<\sum_{j \ne i}^N f(x_{i}^{(t)}-x_j^{(t)}) w_j \cdot
(x_j^{(t)}-x_i^{(t+1)}) , v_x \right> \\
&=&\sum_{j \ne i}^N f(x_{i}^{(t)}-x_j^{(t)}) w_j \cdot
\left<x_j^{(t)}-x_i^{(t+1)} , v_x \right> \\
&=&\sum_{j \ne i}^N f(x_{i}^{(t)}-x_j^{(t)})w_j \cdot  \left(
\left<x_j^{(t)}-v_{1,i} , v_x \right> +\left<v_{1,i}-x_i^{(t+1)} , v_x \right> \right)\\
&\geq& \sum_{j \ne i}^N f(x_{i}^{(t)}-x_j^{(t)})  w_j\alpha
\|x_j^{(t)}-v_{1,i}\| \\
&>& \max_j w_j \cdot \alpha \epsilon_1 \sum_{j \ne i}^N f(x_{i}^{(t)}-x_j^{(t)})  \\
\end{eqnarray*}
for $t>t_3$, and
\[
\left<x_i^{(t+1)}-x_i^{(t)} , v_x \right> \leq \|
x_i^{(t+1)}-x_i^{(t)}\| < \epsilon_2
\]
for $t>t_2$. Therefore, for $t> \max (t_3,t_2)$,
\[
\max_j w_j \cdot\alpha \epsilon_1 \sum_{j \ne i}^N
f(x_{i}^{(t)}-x_j^{(t)}) < w_i\epsilon_2.
\]
Since $\epsilon_2$ can be arbitrarily small, the inequality above
implies
\[
\sum_{j \ne i}^N f(x_{i}^{(t)}-x_j^{(t)}) \to 0.
\]
Since $f\geq 0$, $f(x_{i}^{(t)}-x_j^{(t)}) \to 0$ for all $j\ne
i$.
\end{proof}

From the above, we can claim a similar result for $C_2$ as Lemma
$\ref{lemma:cv}$ for $C_1$: each of the vertex of $C_2$ has at least
one data point converges to. The same argument can apply again and
again to $C_3$, $C_4$, $\ldots$, until all data points converge.
This completes the proof of Theorem \ref{thm:main}.

Although Theorem \ref{thm:main} guarantees the convergence when $f$
has {\it PDD} condition, there are some $f$'s that produce trivial
clustering results, in which all data points are clustered into one
single group. We identify such $f$'s in the following corollary.
\begin{corollary}\label{cor:f>0}
Let $r_M\equiv \max_{i,j}\{||x_i-x_j||\}$. If $f$ is PDD with
$f(r_M)>0$, then there exists $c$, such that
\[
\lim_{t \to \infty}x_i^{(t)}=c \quad\quad \forall i.
\]
\end{corollary}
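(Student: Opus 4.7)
The plan is to combine Theorem~\ref{thm:main} (which already gives convergence of every $x_i^{(t)}$ to some limit $x_i^*$) with Lemma~\ref{lemma:fto0} to rule out, by contradiction, the possibility that two of the limits differ. The hypothesis $f(r_M)>0$ will be used exactly once, to supply a uniform positive lower bound on the pairwise influences throughout the iteration.

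First I would recall that by Lemma~\ref{lemma:monotone} the convex hulls $C_1^{(t)}$ are nested inside $C_1^{(0)}$, whose diameter is $r_M$. Since $x_i^{(t)}\in C_1^{(t)}\subseteq C_1^{(0)}$ for every $i$ and $t$, the pairwise distance satisfies $\|x_i^{(t)}-x_j^{(t)}\|\le r_M$ for all $i,j,t$. Because $f$ is PDD, and hence decreasing in distance, this yields the uniform bound
\[
f\bigl(x_i^{(t)}-x_j^{(t)}\bigr)\;\ge\;f(r_M)\;>\;0
\qquad\text{for all }i,j,t.
\]

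Now suppose, for contradiction, that the limits $x_1^*,\ldots,x_N^*$ are not all equal. Then the limit hull $C_1$ contains at least two distinct vertices $v_{1,a}\ne v_{1,b}$. By Lemma~\ref{lemma:cv} there exist indices $i$ and $j$ with $x_i^{(t)}\to v_{1,a}$ and $x_j^{(t)}\to v_{1,b}$; in particular $x_i\in\Omega_1$ and $\lim_t x_j^{(t)}\ne\lim_t x_i^{(t)}$. Lemma~\ref{lemma:fto0} then forces $f(x_i^{(t)}-x_j^{(t)})\to 0$, which contradicts the uniform positive lower bound above. Hence $C_1$ is a single point, call it $c$.

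It remains to conclude that every $x_i^{(t)}$ converges to this common $c$. Since $C_1=\bigcap_{t\ge 0}C_1^{(t)}=\{c\}$ and the hulls are nested and compact, for every $\epsilon>0$ there is a $T$ with $C_1^{(t)}\subseteq B(c,\epsilon)$ for all $t\ge T$; because $x_i^{(t)}\in C_1^{(t)}$, this gives $x_i^{(t)}\to c$ for every $i$. The only delicate point in the whole argument is making sure that the contradiction really does invoke Lemma~\ref{lemma:fto0} legitimately, which is why I extract two vertices of $C_1$ via Lemma~\ref{lemma:cv} rather than two arbitrary indices; apart from that, the proof reduces to observing that PDD monotonicity converts the hypothesis $f(r_M)>0$ into a uniform-in-time strict positivity of pairwise influences.
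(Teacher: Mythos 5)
Your proposal is correct and follows essentially the same route as the paper: Lemma~\ref{lemma:monotone} plus the PDD property give the uniform lower bound $f(x_i^{(t)}-x_j^{(t)})\ge f(r_M)>0$, which contradicts the vanishing influence guaranteed by Lemma~\ref{lemma:fto0} unless all limits coincide. Your version is in fact slightly more careful than the paper's, since you explicitly route the contradiction through Lemma~\ref{lemma:cv} to obtain a point of $\Omega_1$ at a vertex of $C_1$, which is the setting in which Lemma~\ref{lemma:fto0} is actually stated.
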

\begin{proof}
Lemma \ref{lemma:monotone} implies that $||x_i^{(t)}-x_j^{(t)}||
\leq r_M$ for every $t$, $i$ and$j$. Since $f$ is decreasing with
respect to distance, $f(x_i^{(t)}- x_j^{(t)}) \geq f(r_M)>0$. Lemma
\ref{lemma:fto0} shows that, however, the influence between any two
points which do not converge to the same position tends to zero.
Thus, $f(x_i^{(t)}- x_j^{(t)}) \geq f(r_M)>0$ for every $i$ and $j$,
which implies that all data points converge to the same position.
\end{proof}

For the purpose of clustering, it is not desirable to have all data
points converged to the same position. To prevent trivial clustering
results, $f$ has to be zero on $(r, \infty)$ for some $r<r_M$.

\section{Consistency}

In the previous section, we proved the convergence of the algorithm.
In this section, we study the estimation consistency of the
algorithm. We show the consistency for the Normal case and remark on
more general cases. The difficulty of our consistency proof arises
from blurring process, i.e., the the iterative data shrinkage
update.

Assume $x_i$'s $\in R^p$ are i.i.d. sampled from $N(0,\Sigma)$, and
the mutual influence function $f$ adopted is $\exp (-(x-y)^\top
(x-y)/2\tau^2)$, where $(x-y)^\top$ is the transpose of vector
$x-y$. Assume $w=1$. The updating rule is:
\begin{equation} \label{eq:updatem}
x_{i,n}^{(t+1)}= \sum_{j=1}^N  \displaystyle
\frac{f(x_{i,n}^{(t)}-x_{j,n}^{(t)}) } {\sum_{j=1}^N
f(x_{i,n}^{(t)}-x_{j,n}^{(t)})}x_{j,n}^{(t)},
\end{equation}
where $x_{i,n}^{(t)}$ denotes the updated $x_i$ at $t$-th iteration
when considering only first $n$ samples. By Corollary \ref{cor:f>0}
presented in the previous section, we know that for all $i$
\[
\lim_{t \to \infty} x_{i,n}^{(t)}=c
\]
for the same $c$. Here we want to show that $c$ will converge to
zero almost surely, which we state as the following theorem:
\begin{theorem}\label{thm:consit}
\[
\lim_{n\to \infty}\lim_{t \to \infty} x_{i,n}^{(t)}=0 \quad
\mbox{a.s.}
\]
\end{theorem}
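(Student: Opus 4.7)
My plan is to compare the finite-sample iteration with its deterministic population counterpart and exploit the symmetry of the Gaussian. Define the population update for a distribution $F$ on $\mathbb{R}^p$ by
\[ T_F(x) = \frac{\int y\, f(x-y)\, dF(y)}{\int f(x-y)\, dF(y)}. \]
For the Gaussian kernel $f(u)=\exp(-\|u\|^2/(2\tau^2))$ and $F=N(0,\Sigma)$, completing the square in the Gaussian integrals gives the explicit linear form $T_F(x)=A_\Sigma x$ with $A_\Sigma=(I+\tau^2\Sigma^{-1})^{-1}$, a strict linear contraction (all eigenvalues lie in $(0,1)$). Moreover, $X\sim N(0,\Sigma)$ implies $T_F(X)\sim N(0, A_\Sigma\Sigma A_\Sigma^\top)$, so the population trajectory stays Gaussian and centered at $0$, with covariance $\Sigma_t\to 0$; every population trajectory is thus driven to $0$.

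Next, by the strong law of large numbers applied separately to the numerator and denominator of the empirical update \eqref{eq:updatem}, for each fixed $i$ and each fixed $t$ the iterate $x_{i,n}^{(t)}$ converges a.s.\ to $M_t x_i$ as $n\to\infty$, where $M_t=A_{\Sigma_{t-1}}\cdots A_{\Sigma_0}$; this is proved by induction on $t$, using that $y\mapsto f(x-y)$ and $y\mapsto yf(x-y)$ are bounded continuous. By Lemma \ref{lemma:monotone} the common value $c_n=\lim_t x_{i,n}^{(t)}$ (whose existence is guaranteed by Corollary \ref{cor:f>0}) lies in the convex hull of $\{x_{j,n}^{(t)}:j\leq n\}$ for every $t$, so $\|c_n\|\leq \max_{j\leq n}\|x_{j,n}^{(t)}\|$. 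For large $n$ this is approximately $\|M_t\|\cdot\max_{j\leq n}\|x_j\|$; since $\|M_t\|\to 0$ while $\max_{j\leq n}\|x_j\|=O(\sqrt{\log n})$ a.s., a suitably slow choice $t=t(n)\to\infty$ drives $\|c_n\|$ to $0$. As a sanity check, the equivariance of \eqref{eq:updatem} under $x\mapsto -x$ together with $X\stackrel{d}{=}-X$ forces $c_n$ to be symmetrically distributed about $0$, so any a.s.\ limit must equal $0$ rather than a nonzero constant.

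The main obstacle is the exchange of limits in $t$ and $n$: the SLLN approximation above is pointwise for fixed $t$ and fixed $i$, whereas the convex-hull bound needs it uniformly over $j\leq n$ at a growing $t=t(n)$. Establishing this uniformity --- essentially a Glivenko--Cantelli-type statement for the function class $\{y\mapsto yf(x-y):x\in\mathbb{R}^p\}$, together with a Gaussian tail bound on $\max_{j\leq n}\|x_j\|$ --- is the technical crux of the argument.
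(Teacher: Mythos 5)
Your core computation --- completing the square to obtain the linear map $T_F(x)=(I+\tau^2\Sigma^{-1})^{-1}x$ and the resulting covariance recursion driving $\Sigma_t\to 0$ --- is exactly the paper's equation (\ref{eq:shrink}) and the eigenvalue argument that follows it. The divergence, and the gap, is in how you transfer this population statement to the sample. You propose to bound the common limit $c_n$ by $\max_{j\le n}\|x_{j,n}^{(t)}\|$ via the nested convex hulls and then assert that this maximum is approximately $\|M_t\|\cdot\max_{j\le n}\|x_j\|$. That step fails, and not merely for want of a uniformity lemma: the pointwise SLLN approximation $x_{j,n}^{(t)}\approx M_t x_j$ holds at \emph{fixed} $x$, and it degrades precisely at the extreme sample points that determine the hull. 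A sample point at distance $\sqrt{2\log n}$ from the origin receives total kernel weight of order $n^{1-1/\tau^2}$ from the bulk versus order $1$ from itself; for $\tau<1$ the extreme points are essentially not moved by one update, so the convex hull does not contract at the population rate $\|M_t\|$ and your bound on $\|c_n\|$ does not close. You correctly flag this as the crux, but it is not a technicality to be patched by a Glivenko--Cantelli statement for the class $\{y\mapsto yf(x-y)\}$; the functional $\max_{j\le n}\|x_{j,n}^{(t)}\|$ is the wrong thing to control.

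The paper's proof avoids the maximum altogether. It propagates, by induction on $t$, a uniform bound $\sup_x|G_n^{(t)}(x)-G(x;\Sigma_t)|<\epsilon_t$ on the empirical CDF of the updated points, using your $T_F$ computation together with the boundedness of $f(x-y)$ and $yf(x-y)$ to show that an $\epsilon_s$-accurate empirical distribution at step $s$ yields an $O(\epsilon_s)$-accurate one at step $s+1$. Then, once $\Sigma_{t_0}$ is small, it concludes only that \emph{almost all} (a fraction tending to one of) the updated points lie in a ball $B(0,\epsilon)$, argues that such points cannot subsequently leave that ball, and invokes Corollary \ref{cor:f>0} to place the common limit within $\epsilon$ of the origin. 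If you want to salvage your outline, replace the convex-hull/maximum bound with a statement about the \emph{proportion} of points outside $B(0,\epsilon)$, which is what uniform CDF convergence actually delivers. Your symmetry observation (that $c_n$ is symmetrically distributed about $0$) is a reasonable sanity check but cannot by itself upgrade ``symmetric in distribution'' to ``equal to $0$ almost surely.''
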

%\begin{equation}\label{eq:consis}
%\lim_{n\to \infty}\lim_{t \to \infty} x_{i,n}^{(t)}=0.
%\end{equation}
\begin{proof}

Let $G(x;\Sigma)$ be the CDF of $N(0,\Sigma)$, $G_n^{(t)}(x)$ be the
empirical CDF of the $n$-sample at $t$-th iteration, and
$G^{(t)}(x)=\lim_{n \to \infty}G_n^{(t)}(x)$. By Glivenko-Cantelli
theorem,
\[
\lim_{n \to \infty} \sup_x |G_n^{(0)}(x) - G(x,\Sigma)|=0 \quad
{\mbox a.s.}
\]
We claim that the the empirical distribution of the updated data
points of each iteration converges to a Normal distribution. In the
following, we show that
%\begin{theorem}\label{thm:conv_dist}
%For any $t$ and $\epsilon >0 $, there exists $n_0$, such that for
%all $n>n_0$
\begin{equation}\label{eq:conv}
\lim_{n \to \infty} \sup_x |G_n^{(t)}(x) - G^{(t)}(x)|=0 \quad
\mbox{a.s.}
\end{equation}
where $G^{(t)}(x)= G(x ;\Sigma_t)$.
%\end{theorem}
This is true for $t=0$. Assume that it is true for $t=s$, we want to
show that it is true for $t=s+1$. Assume that
\[
\sup_x |G_n^{(s)}(x) - G^{(s)}(x)|<\epsilon_s,
\]
for $n>N_{\epsilon_s}$. Define
\[
K_H(x)=\frac {\int_y f(x-y)\cdot  y \cdot dH(y)} {\int_y f(x-y)\cdot
dH(y)}.
\]
With the assumption that $G^{(s)}(x)=G( x; \Sigma_s)$, we have
\begin{eqnarray*}
f(x-y)dG^{(s)}
&=& c_s \exp({-\frac {(x-y)^\top(x-y)}{2 \tau^2}}) \cdot \exp({-\frac {y^\top \Sigma_s^{-1} y}{2 }})dy\\
&=&c_s \exp\left[-\frac 12 \left\{ \frac 1{\tau^2}( {x^\top x}-{2x^\top y}) + y^\top(I/\tau^2+\Sigma_s^{-1})y\right\}\right]dy\\
&=&c_s'(x) \exp\left[-\frac 12 \left\{
y-(I+\tau^2\Sigma_s^{-1})^{-1}x\right\}^\top
(I/\tau^2+\Sigma_s^{-1})\right.\\
&&\left.\vphantom{\frac
12}\left\{y-(I+\tau^2\Sigma_s^{-1})^{-1}x)\right\}\right]dy.
\end{eqnarray*}
Therefore,
\begin{equation}\label{eq:shrink}
K_{G^{(s)}}(x)=(I+\tau^2\Sigma_s^{-1})^{-1}x.
\end{equation}
Since
\[
|G_n^{(s)}(x)-G^{(s)}(x)|<\epsilon_s
\]
and $f(x-y)y$ and $f(x-y)$ are bounded, we have
\begin{equation}\label{eq:conv_K}
||K_{G_n^{(s)}}(x) - K_{G^{(s)}}(x)||_2 < \alpha_s \epsilon_s
\end{equation}
for some positive number $\alpha_s$ where $||\cdot||_2$ is the $L^2$
norm. Since
\begin{eqnarray*}
K_{G_n^{(s)}}(x_{i,n}^{(s)})&=&\frac {\int_y f(x_{i,n}^{(s)}-y)\cdot  y \cdot d{G_n^{(s)}(y)}} {\int_y f(x_{i,n}^{(s)}-y)\cdot dG_n^{(s)}(y)}\\
&=&\frac{\sum_{j=1}^N f_s(x_{i,n}^{(s)}-x_{j,n}^{(s)})x_{j,n}^{(s)}}
{\sum_{j=1}^N
f_s(x_{i,n}^{(s)}-x_{j,n}^{(s)})}\\
&=&x_{i,n}^{(s+1)},
\end{eqnarray*}
we have
\begin{eqnarray*}
&& ||x_{i,n}^{(s+1)} - (I+\tau^2\Sigma_s^{-1})^{-1}x_{i,n}^{(s)}||_2\\
&=& ||K_{G_n^{(s)}}(x_{i,n}^{(s)}) - K_{G^{(s)}}(x_{i,n}^{(s)})||_2\\
&<&\alpha_s \epsilon_s.
\end{eqnarray*}
The empirical distribution of $x_{i,n}^{(s+1)}$ is $G_n^{(s+1)}(x)$,
and that of $(I+\tau^2\Sigma_s^{-1})^{-1}x_{i,n}^{(s)}$ is
$G_n^{(s)} ((I+\tau^2\Sigma_s^{-1})x)$. Then
\begin{eqnarray*}
&& \left|G_n^{(s+1)}(x)-G^{(s)} \left((I+\tau^2\Sigma_s^{-1})x\right)\right|\\
&\leq& \max_{||\Delta x ||<\alpha_s \epsilon_s } \left|G_n^{(s)} \left( (I+\tau^2\Sigma_s^{-1})(x+\Delta x)\right)-G^{(s)} \left((I+\tau^2\Sigma_s^{-1})x\right)\right| \\
&\leq& \max_{||\Delta x ||<\alpha_s \epsilon_s } \left\{
\left|G_n^{(s)} \left( (I+\tau^2\Sigma_s^{-1})(x+\Delta
x)\right)-G^{(s)}
\left((I+\tau^2\Sigma_s^{-1})(x+\Delta x)\right)\right|\right.\\
&&+\left. \left|G^{(s)}
\left((I+\tau^2\Sigma_s^{-1})(x+\Delta x)\right)-G^{(s)} \left((I+\tau^2\Sigma_s^{-1})x\right)\right|\right\}\\
&<& \epsilon_s + \max_{||\Delta x ||<\alpha_s \epsilon_s }
\left|G^{(s)}
\left((I+\tau^2\Sigma_s^{-1})(x+\Delta x)\right)-G^{(s)} \left((I+\tau^2\Sigma_s^{-1})x\right)\right|\\
&\leq& \epsilon_s+\max_{||\Delta x ||<\alpha_s \epsilon_s }
||(I+\tau^2\Sigma_s^{-1})\Delta x||_2 \max_{||\Delta x ||<\alpha_s
\epsilon_s }
||\frac {\partial }{\partial x}G^{(s)}(x+\Delta x)||_2\\
&\leq&\epsilon_s +\lambda\alpha_s \epsilon_s \frac
1{\sqrt{2\pi}|\det(I+\tau^2\Sigma_s^{-1})|^{1/2}},
\end{eqnarray*}
where $\lambda$ is the largest eigenvalue of
$I+\tau^2\Sigma_s^{-1}$. Therfore, $|G_n^{(s+1)}(x)-G^{(s)}
((I+\tau^2\Sigma_s^{-1})x)|$ can be arbitrarily small by choosing a
small enough $\epsilon_s$. This completes the induction.
%Next, we want to prove that (\ref{eq:consis}) holds.
%By Corollary \ref{cor:f>0} in the previous section, all data points converge to the same location. Therefore, it is equivalent to show that
%the average of the updated data points goes to zero. Let $M_n=\max |x_{i,n}|$.
%\begin{eqnarray*}
%\Pr(M_n \leq n)&=&\Pr(|x_{i,n}| \leq n, \forall i)\\
%&=&\Pr(|x_{1,n}|<n)^n\\
%&=&(1-2 G(-n))^n\\
%&\geq& 1-2 n G(-n)\\
%&\geq& 1 - 2 n \frac 1 {\sqrt{2 \pi}} \frac 1n e^{-n^2/2}\\
%&=& 1-\frac 2 {\sqrt{2 \pi}}  e^{-n^2/2}.
%\end{eqnarray*}
%Then
%\[
%\sum_{n=1}^{\infty}\Pr(M_n > n)=\sum_{n=1}^{\infty}\frac 2 {\sqrt{2 \pi}}  e^{-n^2/2} <\infty.
%\]
%By Borel-Cantelli lemma,
%\[
%\Pr(M_n > n, \mbox{ infinitely often})=0.
%\]
%Therefore, almost surely $M_n \leq n|$ for large $n$.

From (\ref{eq:shrink}), we have
\[
\Sigma_{s+1}=(I+\tau^2\Sigma_s^{-1})^{-1} \Sigma_s
(I+\tau^2\Sigma_s^{-1})^{-1}.
\]
Since $\Sigma_s$ is a covariance matrix, it is symmetric and
positive definite. Then $\Sigma_s$ can be factorized as
\[
\Sigma_s=P \Lambda_s P^\top
\]
where $P P^\top=I$ and $\Lambda_s$ is a diagonal matrix. Then
\begin{eqnarray*}
\Sigma_s^{-1}&=&P \Lambda_s^{-1} P^\top,\\
I+\tau^2\Sigma_s^{-1}&=&P (I+\tau^2\Lambda_s^{-1}) P^\top,\\
\Sigma_{s+1}&=&(I+\tau^2\Sigma_s^{-1})^{-1} \Sigma_s
(I+\tau^2\Sigma_s^{-1})^{-1}\\
&=&P (I+\tau^2\Lambda_s^{-1})^{-1} \Lambda_s
(I+\tau^2\Lambda_s^{-1})^{-1} P^\top.
\end{eqnarray*}
Therefore, $\Sigma_s$ and $\Sigma_{s+1}$ share the same
eigenvectors. Assume that $\lambda_i^{(s)}$'s are the eigenvalues of
$\Sigma_s$ and $\lambda_i^{(s+1)}$'s are those of $\Sigma_{s+1}$.
Then
\begin{eqnarray*}
\lambda_i^{(s+1)}&=& (1+\tau^2/\lambda_i^{(s)})^{-1} \lambda_i^{(s)}
(1+\tau^2/\lambda_i^{(s)})^{-1}\\
&=& \frac {(\lambda_i^{(s)})^2}{(\lambda_i^{(s)}+\tau^2)^2} \lambda_i^{(s)}\\
&\leq& \frac {(\lambda_i^{(0)})^2}{(\lambda_i^{(0)}+\tau^2)^2}
\lambda_i^{(s)} \\
&\leq& \left\{\frac
{(\lambda_i^{(0)})^2}{(\lambda_i^{(0)}+\tau^2)^2}\right\}^{s+1}
\lambda_i^{(0)}.
\end{eqnarray*}
Therefore $\lambda_i^{(s)} \to 0$ as $s \to \infty$. For any
$\epsilon$, there exists $t_0$ such that $\min_i \lambda_i^{(t_0)} <
\epsilon^2/k$, where $k$ is a large integer. From (\ref{eq:conv}),
almost surely
\[
\sup_x |G_{n}^{(t_0)}(x) - G^{(t_0)}(x)| \to 0.
\]
Equivalently,
\[
\sup_A |G_{n}^{(t_0)}(A) - G^{(t_0)}(A)| \to 0,
\]
where $G_{n}^{(t_0)}(A)$ and $G^{(t_0)}(A)$ denote the probabilities
of $x \in A$. Therefore, for any $\delta>0$, there exists $n_{t_0}$
such that
\[
\sup_A |G_{n}^{(t_0)}(A) - G^{(t_0)}(A)| <\delta
\]
for all $n>n_{t_0}$. Then
\begin{eqnarray*}
\Pr(||x_{i,n}^{(t_0)}||_2 > \epsilon)&=&G_{n}^{(t_0)}(x^\top x >
\epsilon^2)\\
&<&G^{(t_0)}(x^\top x >
\epsilon^2) +\delta\\
&=&G^{(t_0)}(\frac 1 {\min_i \lambda_i^{(t_0)}} x^\top x > \frac {\epsilon^2} {\min_i \lambda_i^{(t_0)}} )+\delta\\
&\leq &G^{(t_0)}(x^\top
\Sigma_{t_0}^{-1} x > \frac {\epsilon^2} {\min_i \lambda_i^{(t_0)}} )+\delta\\
&\leq & G^{(t_0)}(x^\top \Sigma_{t_0}^{-1} x > k )+\delta\\
&=& G(x^\top x>k ; I_p) +\delta,
\end{eqnarray*}
where $I_p$ is the identity matrix. This can be arbitrarily small by
choosing $k$ large enough and $\delta$ small enough. Therefore,
almost all updated data points are in $B(0,\epsilon)$ at $t_0$-th
iteration, where $B(0,\epsilon)=\{x:||x||_2 > \epsilon\}$. For
iteration $t>t_0$, all updated data points within $B(0,\epsilon)$
will not move outside $B(0,\epsilon)$, since there are more updated
data points and hence more influence in the direction toward to
zero. Therefore, $|x_{i,n}^{(t)}| \leq \epsilon$ for almost all $i$
and for all $t>t_0$ and $n>n_{t_0}$. By Corollary \ref{cor:f>0}, all
data points will converge to a single location. We have
\[
||\lim_{t\to \infty} x_{i,n}^{(t)}||_2 \leq \epsilon.
\]
for all $i$ when $n>n_{t_0}$, which completes the proof.
\end{proof}

\begin{remark}
In this section, we present the results under the assumption that
both $f$ and $G$ are Normal. The results can be generalized to
general second order kernel functions with translation invariance.
For this type of kernel functions, the empirical distribution at
each iteration still converges to some distribution, and the
variance is decreasing through iterations. The shrunk distribution,
however, may not have a nice form as that in the Normal case.
\end{remark}

\begin{remark}
If the data points are sampled from a finite mixture distribution,
the locations which the data points converge to through the
iterative process may not be consistent to the parameters. Take the
mixture distribution  $\alpha_1 N(\mu_1,1)+ (1-\alpha_1) N(\mu_2,1)$
as an example. By choosing a proper $f$, data points will be
clustered into two groups. Since the domains of these two Normal
distribution are overlapped, the converged locations through the
iterative process will not converge to $\mu_1$ and $\mu_2$.
\end{remark}

\section{Simulation}
In this section we consider a one dimensional case where the data is
sampled from $N(0,\sigma_0^2)$. The $f$ function in (\ref{eq:bms})
is taken to be $f=\exp (-(x-y)^2/2\tau^2)$. We used three
experiments to compare the blurring and the nonblurring processes in
the following three aspects: the convergence rate, the efficiency,
and the robustness to the outliers.

\subsection{Convergence rate}

Based on (\ref{eq:shrink}), we have shown that
\[
K_{G^{(s)}}(x)=\frac {\int_y f(x-y)\cdot  y \cdot dG^{(s)}(y)}
{\int_y f(x-y)\cdot
dG^{(s)}(y)}=\frac{\sigma_s^2}{\sigma_s^2+\tau^2}x.
\]
For the nonblurring process, the integration is over the original
data, instead of updated data. The shrinkage ratio is therefore
$\frac{\sigma_0^2}{\sigma_0^2+\tau^2}$, meaning that the convergence
rate of the blurring process is higher than that of the nonblurring
process. Take $\sigma_0=1$ and $\tau=2$ as an example. For the
blurring process,
\begin{eqnarray*}
\sigma_1&=&\sigma_0\frac{\sigma_0^2}{\sigma_0^2+\tau^2}=\frac{1^2}{1^2+2^2}=0.2\\
\sigma_2&=&\sigma_1\frac{\sigma_1^2}{\sigma_1^2+\tau^2}=0.2\frac{0.2^2}{0.2^2+2^2}\approx
0.002\\
\sigma_3&=&\sigma_2\frac{\sigma_2^2}{\sigma_2^2+\tau^2}=0.002\frac{0.002^2}{0.002^2+2^2}\approx
0.000000002.
\end{eqnarray*}
For the nonblurring process,
\begin{eqnarray*}
\sigma^{'}_1&=&\sigma^{'}_0\frac{\sigma_0^2}{\sigma_0^2+\tau^2}=\frac{1^2}{1^2+2^2}=0.2\\
\sigma^{'}_2&=&\sigma^{'}_1\frac{\sigma_0^2}{\sigma_0^2+\tau^2}=0.2\frac{1^2}{1^2+2^2}=
0.04\\
\sigma^{'}_3&=&\sigma^{'}_2\frac{\sigma_0^2}{\sigma_0^2+\tau^2}=0.04\frac{1^2}{1^2+2^2}=
0.008.
\end{eqnarray*}
\begin{figure}[htb]
\centering
\includegraphics[width=\textwidth]{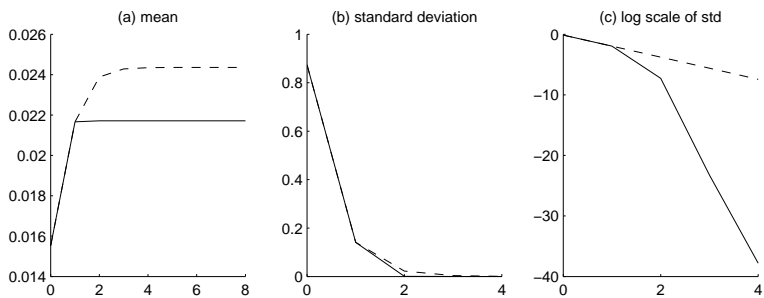}
\caption{The simulation results on 100 samplings from N(0,1). The
solid line is from the blurring process, and the dash line is from
the nonblurring process. } \label{fig:fig1}
\end{figure}

In this experiment, we sampled 100 data points from $N(0,1)$. Fig.
\ref{fig:fig1} presents the simulation results by the blurring and
the nonblurring process. In details, Fig. \ref{fig:fig1}(a) shows
that both processes converged to very close to the true mean of
zero. Fig. \ref{fig:fig1}(b) shows that the standard deviations of
the updated data points dropped way down at the first iteration and
became nearly zero after the second iteration. This illustrates that
both processes converged very fast, while the updated data points by
the blurring process shrunk even much faster. Fig. \ref{fig:fig1}(c)
further presents the shrinkage of the updated data points in terms
of the log scale of the standard deviations in Fig.
\ref{fig:fig1}(b).

\subsection{Efficiency}
In this experiment we consider $\tau$ to be 0.5, 1 or 2. For each
$\tau$ value, we simulated 100,000 sets of 100 data points, which
were again sampled from $N(0,1)$. According to the simulated 100,000
sets, we summarized the means and the standard deviations of the
following three statistics: the sample mean, the number each set of
data points converged to by the blurring process and that by the
nonblurring processes. The results were presented in Table
\ref{tab:nonoise}.

 In this experiment, we consider 100 data points
were sampled from $N(0,1)$. Now we experiments with $\tau=0.5$, 1
and 2. For each parameter, we simulate 100,000 times. The means and
the standard deviations of the sample mean and the converged numbers
of blurring and nonblurring processes in these 100,1000 simulations
are presented in Table \ref{tab:nonoise}. There is no noticeable
difference between the means of three statistics. We did run
multiple 100,000-sample sets, and the orders (with respect to the
absolute value) are different for different sets. However, the
standard deviations of the three statistics are clearly different.
The standard deviations of the sample means are close to 0.1, which
is the theoretic value. The standard deviations of the converged
number from the blurring process are smaller than that from the
nonblurring process. Therefore, the converged number from the
blurring one seems to be a better estimator over that from the
nonblurring one.

\begin{table}[!h]
\caption{The mean and the standard deviation of the converged
points}\label{tab:nonoise} \centering
\begin{tabular}{|l|c|c|c|}\hline
$\tau$ & Sample Mean & Blurring & Nonblurring\\ \hline
0.5&-1.897*10$^{-4}$ (0.1000)&-5.697*10$^{-4}$
(0.1210)&-5.349*10$^{-4}$ (0.2126)\\\hline 1&1.260*10$^{-4}$
(0.0997)&2.400*10$^{-4}$ (0.1043)&4.185*10$^{-4}$ (0.1239)\\\hline
2&6.352*10$^{-4}$ (0.0998)&5.842*10$^{-4}$ (0.1008)&5.565*10$^{-4}$
(0.1025)\\\hline
\end{tabular}
\end{table}

There is no noticeable difference between the means of the three statistics. We did run multiple 100,000-sample sets, and the orders (with respect to the absolute value) are different for different sets. However, the standard deviations of the three statistics were clearly different. The standard deviations of the sample means were close to 0.1, which is the theoretical value. The standard deviations of the numbers where the data points converged to by the blurring process were closer to those of the sample mean, and were smaller than those by the nonblurring process. This suggests that the blurring process produced more efficient? estimates than the nonblurring process.

\subsection{Robustness to outliers}

n this experiment, each data set has 95 data points sampled from $N(0,1)$ and another 5 data points from $N(5,1)$. We consider $\tau$ to be 0.5, 1, or 2. For each $\tau$ value, we simulated 100,000 data sets.

By Corollary \ref{cor:f>0}, all data points should converge to a single number. However, due to the floating precision, the outliers which are far from most of the data points may converge to different numbers. For both the blurring and
the nonblurring process, we take the number that most of data points converged to as the statistic. The results are presented in Table \ref{tab:noise}. While the sample mean was no longer an unbiased estimator of the true mean when outliers are present, Table \ref{tab:noise} shows that the numbers where most of data points converged to by the blurring and the nonblurring processes were still very close to the true mean of zero. This suggests that both processes remained to produce good estimates for the mean. The standard deviations produced by the blurring process were again smaller than those by the nonblurring process.

\begin{table}[!h]
\caption{The mean and the standard deviation of the converged points
with 5\% outliers }\label{tab:noise} \centering
\begin{tabular}{|l|c|c|c|}\hline
$\tau$ & Sample Mean & Blurring & Nonblurring\\\hline 0.5&0.2495
(0.1003)& -0.0006 (0.1241) & -0.0038 (0.2167)\\\hline 1&0.2495
(0.1000)& -0.0106 (0.1102) &  0.0002 (0.1276)\\\hline 2&0.2503
(0.0998)&0.0928 (0.1046)& 0.0220 (0.1080)
\\\hline
\end{tabular}
\end{table}

\section{Discussion and Conclusion}

In this paper, we first give a rigorous mathematical proof of the convergence of the blurring mean-shift process. Our result is under the condition that $f$ is PDD and $w$ depends only on data points.

We also prove the consistency of the blurring process, which ensures the estimation to converge to the true values of the parameters as the number of data points goes to infinity. Our consistency proof is for the Normal case, in which we could show the
explicit form of the shrinkage rate of the data points. The consistency for more general kernel functions can be proven in similar arguments.

From our simulation studies, both the blurring and the nonblurring processes have good robustness against outliers. The estimations by the blurring process usually yield smaller variances than those by the nonblurring process.

\section*{Acknowledgements}
The author would like to thank Pei Lun Tseng for suggesting a
shorter proof on Lemma 2, and Professor Chii-Ruey Hwang and
Professor Su-Yun Huang for inputs and discussions.

% BibTeX users please use one of
\bibliographystyle{apalike}      % basic style, author-year citations
\bibliography{conv}   % name your BibTeX data base

\end{document}